\title{Online Bandit Linear Optimization: A Study}
\author{V. Mullachery and S. Tiwari}
\begin{document}

\maketitle

\begin{abstract}
  This article introduces the concepts around Online Bandit Linear Optimization and explores an efficient setup called SCRiBLe (Self-Concordant Regularization in Bandit Learning) created by Abernethy et. al.\cite{abernethy}. The SCRiBLe setup and algorithm yield a $O(\sqrt{T})$ regret bound and polynomial run time complexity bound on the dimension of the input space. In this article we build up to the bandit linear optimization case and study SCRiBLe.
\end{abstract}

\begin{keywords}
  Online, Bandit Learning, Self-Concordant Barriers, Convex Optimization, Dikin ellipsoid, Follow The Regularized Leader (FTRL)
\end{keywords}

\section{Introduction}
Online learning setting involves making predictions at every time step. The learner makes a prediction of the label based on the observed input. And then the environment reveals a loss or cost to the learner. Based on this the learner attempts to ameliorate his next prediction and simultenously reduce the cumulative cost. In a bandit setting the learner does not know of the loss function at each time step, only the loss value $\in \mathcal{R}$. Further, this setting does not make any distributional assumption about the actual input data.

SCRiBLe is a learning algorithm that was designed for such scenarios and we discuss it in this article. We describe its algorithmic steps as well as the prerequisites for it's usage. We enumerate and comment about the mathematical underpinnings of this technique. The main theoretical results are in \cref{sec:main}, and the SCRiBLe algorithm is described in \cref{sec:alg}

\section{On-line Convex Optimization Recap}
$\mathcal{K}$ is a convex compact set and $\mathbf{f}_t$ is a convex function defined on $\mathcal{K}$, chosen by the environment in an obvlivious fashion ahead of the player's choices. At each time step, the player predicts a vector $\mathbf{x}_t \in \mathcal{K}$. The player is revealed the loss function $\mathbf{f}_t$. Loss of learner is then computed as $\mathbf{f}_t (\mathbf{x}_t)$. Regret of the player's learning algorithm $\mathcal{A}$ is
    $$R_T(\mathcal{A}) = \sum\limits_{t = 1}^T \mathbf{f}_t (\mathbf{x}_t) - \inf\limits_{\mathbf{x} \in \mathcal{K}} \sum\limits_{t = 1}^T \mathbf{x}_t (\mathbf{x})$$
    Under mild and intuitively obvious assumptions, $O(\sqrt{T})$ regret guarantees are possible, in this scenario.

\section{Bandit On-line Convex Optimization}
Bandit Convex Optimization differs from Online Convex Optimization due to the lack of feedback available to the learner. The learner receives the loss scalar value, $\mathbf{f}_t(\mathbf{x}_t)$, but does not receive the function $\mathbf{f}_t$. Using the same formulation of regret as earlier, we now see that it is much harder to attain $O(\sqrt{T})$ guarantee. 

\section{Bandit Linear Optimization}
\subsection{Problem Setting}
This situation can now be contrasted to the previous and the convex functions $\mathbf{f}_t$ that the environment chooses are linear. If we assume $\mathcal{K} \subset \mathbb{R}^n$, then a linear function is nothing but a vector in $\mathbb{R}^n$ as well. We notice that the environment chooses $\mathbf{f}_t \in \mathbb{R}^n$ and that the learner receives loss $\mathbf{f}_t^T\mathbf{x}_t$. We place the condition that $|\mathbf{f}_t^T\mathbf{x}_t| \leq 1$ $\forall \mathbf{x}_t \in \mathcal{K}$. This is a practical and realistic assumption, which allows for bounds as shown here. Without this condition, the regret can be made to grow linearly.

\subsection{An example}
A canonical motivation for Bandit Linear Optimization is the On-line Shortest Path problem, in which $G = (V, E)$ is a graph, with $s, t \in V$ as the source-sink pair, and with $|E| = n$, the number of edges. The learner is looking for a path from $s$ to $t$, and the environment chooses the time required to traverse each edge. At each time step, the learner is only given the final travel time and no other information. The set of all possible paths may be exponential in $n$, and thus, while this is technically a multi-armed bandit problem, there are far too many arms. The problem is better formulated in $\mathbb{R}^{|E|} = \mathbb{R}^n$. A path is thus a vector in $\mathbb{R}^n$ with each coordinate either $0$ or $1$. Let $\mathcal{K}$ be the convex hull of all possible path vectors. This is well known to be the $\textit{set of all flows}$ in a Graph. Our assumption, $\mathbf{f}_t^T \mathbf{x}_t \le 1$, corresponds to the requirement that no path take longer than $1$ time unit to traverse.

\subsection{Probabilistic Perturbation}
In general for Bandit Linear Optimization problems, the learner must act probabilistic in order to hedge against the possibility of the environment being adversarial in it's choices of $\mathbf{f}_t$. For instance, Follow The Leader algorithm suffers linear regret at the hands of an adaptive adversary. 
Usuallu, the player evaluates an optimum point $\mathbf{x}_t$, and then plays a pertrubation of it, $\mathbf{y}_t$. This perturbation is employed to evaluate $\tilde{\mathbf{f}_t}$, an estimate of $\mathbf{f}_t$. The learner proceeds to evaluate $\mathbf{x}_{t+1}$ using this, and prior estimates. $\tilde{\mathbf{f}_t}$ are single-point gradient estimates, and are random variables such that:
$\mathbb{E}[\tilde{\mathbf{f}_t}] = \mathbf{f}_t$

Since the (possibly adversarial) environment's $\mathbf{f}_t$ are not only convex but also linear, these estimates work well. This idea can be seen to transform the bandit setting into a full information setting.

\subsection{Projected Gradient Descent}
Projection methods, that are instances of mirror descent, require one to work with a $\mathcal{K}_\delta \subset \mathcal{K}$. The points $\mathbf{x}_t$ are necessarily in $\mathcal{K}_\delta$ and a perturbation of size $\delta$ results in $\mathbf{y}_t$ in $\mathcal{K}$. With $\delta, \eta$ as hyper-parameters, BanditPGD obtains the regret:
$R_T(\text{BanditPGD}) = \frac{C_1}{\eta} + \frac{C_2\eta T}{\delta^2} + C_3T\delta$
for constants $C_1, C_2, C_3$.

\subsection{What could go wrong}
Since the optimal point $\mathbf{x}^*$ is guaranteed to be on the boundary of $\mathcal{K}$ (due to the linearity of $\mathbf{f}_t$), and since projection requires $\mathbf{x}_t \in \mathcal{K}_\delta$, we suffer a $O(\delta T)$ cost by staying away from the boundary. The $O(\frac{T}{\delta^2})$ appears due to the $\mathbb{E}[\tilde{\mathbf{f}_t^T}(\mathbf{x}_t - \mathbf{x}^*)]$ term. Using Cauchy-Schwarz to bound this, one encounters $\mathbb{E} |\tilde{\mathbf{f}_t}|^2 = O(\frac{1}{\delta^2}) $. The result is a suboptimal $O(T^{\frac{3}{4}})$ regret guarantee.

Dani, Hayes and Kakade \cite{dani} utilize geometric hedge algorithm and bounded decision set to achieve $O(n^{3/2}\sqrt{T})$ regret. While regret is optimal in time, it does not admit an efficient implementation, due to computational complexity dependence on $poly(n)$. Such computational costs are impractical for applications to Online Shortest Path - recall that $n$ represents the number of edges in the graph.

\subsection{BanditFTRL}
Bandit Follow The Regularized Leader (BanditFTRL), addresses these issues by using superior barrier functions to deal with the geometry of $\mathcal{K}$, instead of projections. Since the cost of projection is removed from regret analysis, expected regret is $O(\sqrt{T})$ and not $O(T^{\frac{3}{4}})$.  In this setup, the most computationally challenging task at each round is finding the $argmin$ of a strongly convex function. If one were to modify the algorithm and replace the $argmin$ step by a single iteration of the Damped Newton method, the expected regret provably enjoys the same asymptotics. This allows for an implementation of a solution to the Online Shortest Path problem with $O^*(\sqrt{T})$ computational complexity.

\section{Self-Concordant Regularization in Bandit Learning (SCRiBLe)}\label{sec:main}

\subsection{Ingredients}
\begin{itemize}
\item A regularization function $\mathcal{R} : \mathcal{K} \rightarrow \mathbb{R}$. Specific regularization functions yield well-known algorithms. If $\mathcal{R} : \mathbf{x} \mapsto \lVert \mathbf{x} \rVert^2$ is chosen, FTRL results in Online Gradient Descent algorithm. Similarly, choosing entropy function gives the Exponentiated Gradient algorithm. Thus, the first ingredient of the algorithm is an appropriate $\mathcal{R}$. As will be seen shortly, a self-concordant barrier function is used as the regularizer.

\item To transform the bandit problem to the full-information case, we use single-point gradient estimates $\tilde{\mathbf{f}_t}$. As observed earlier with BanditPGD, $\mathbb{E}\lVert \tilde{\mathbf{f}_t} \rVert^2$ is the troublesome term. This variance term is avoided by considering local norms instead of the standard Euclidean norm

\item The final ingredient is a sampling scheme. Like in BanditPGD, the learner never plays $\mathbf{x}_t$, but rather plays a nearby point $\mathbf{y}_t$. BanditPGD samples according to a sphere of radius $\delta$. Instead, we use a sampling scheme based on the Dikin ellipsoid.
    
\end{itemize}

\subsection{Prerequisites}
\begin{itemize}

\item \textbf{Self-Concordant Function}: 
$\mathcal{R} : \text{int}(\mathcal{K}) \rightarrow \mathbb{R}$ is a self-concordant function if for any $\mathbf{h} \in \mathbb{R}^n$, we have:
$|D^3\mathcal{R}(\mathbf{x})[\mathbf{h}, \mathbf{h}, \mathbf{h}]| \leq 2(D^2 \mathcal{R}(\mathbf{x})[\mathbf{h}, \mathbf{h}])^{\frac{3}{2}}$. 
Additionally, $\lim\limits_{\mathbf{x} \rightarrow \delta \mathcal{K}} \mathcal{R}(\mathbf{x}) = \infty$. This compares to Legendre-type functions from Mirror Descent.

\item \textbf{Self-Concordant Barrier}: A Self-Concordant $\mathcal{R}$ is a $\vartheta$-Self-Concordant barrier if $\forall \mathbf{h} \in \mathbb{R}^n$: $|D \mathcal{R} (\mathbf{x})[\mathbf{h}]| \leq \sqrt{\vartheta D^2 \mathcal{R} (\mathbf{x})[\mathbf{h}, \mathbf{h}]}$

\item \textbf{Dikin Ellipsoid}: The Hessian of $\mathcal{R}$ is a positive definite symmetric matrix, so that for any point $\mathbf{x} \in \text{int}(\mathcal{K})$, we can define
$\langle \mathbf{y}_1, \mathbf{y}_2 \rangle_\mathbf{x} = \mathbf{y}_1^T \nabla^2 \mathcal{R}(\mathbf{x}) \mathbf{y}_2$. This inner product gives a $\textit{local norm}$ at $\mathbf{x}$, denoted by $\lVert \rVert_\mathbf{x}$. The Dikin Ellipsoid (of radius $1$) at $\mathbf{x}$ is given by: 
$W_1(\mathbf{x}) = \lbrace \mathbf{y} \in \mathcal{K} : \lVert \mathbf{y} - \mathbf{x} \rVert_\mathbf{x} < 1 \rbrace$. The Dikin ellipsoid is therefore an open ball corresponding to a specific inner product on $\mathbb{R}^n$. Note an important property that for any interior point $\mathbf{x}$ of $\mathcal{K}$, the Dikin ellipsoid  at $\mathbf{x}$ is also contained within the interior of $\mathcal{K}$. This is a consequence of the fact that $\mathcal{R}$ is a $\textit{barrier}$ function - the Dikin ellipsoids gets flatter as one approaches the boundary of the set. Thus, by sampling points in the Dikin ellipsoid, one does not need to perform any projections in the algorithm.

\end{itemize}

\subsection{Main Theorem and Supporting Lemmas}

Here we state the main theorem \cref{thm:scriblemain} of the SCRiBLe algorithm. First we introduce two lemmas that are required to prove the theorem.

\begin{lemma}[Bandit Reduction Lemma]
\label{lem:breducn}
Assume we are given any full information algorithm $\mathcal{A}$ and unbiased sampling and estimating schemes $\textbf{sampler, guesser}$. If we let the associated Bandit algorithm be $\mathcal{A}' = \texttt{BanditReduction}(\mathcal{A}, \textbf{sampler, guesser} ) $, then the expected regret of the randomized algorithm $\mathcal{A}'$ on the fixed sequence $\lbrace \mathbf{f}_t \rbrace$ is equal to the expected regret of the deterministic algorithm $\mathcal{A}$ on the random sequence $\lbrace \tilde{\mathbf{f}_t} \rbrace$.
\begin{displaymath}
\mathbb{E}[\text{Regret}^\mathbf{u}(\mathcal{A}; \mathbf{f}_1, \mathbf{f}_2 \ldots \mathbf{f}_T)] = \mathbb{E}[\text{Regret}^\mathbf{u}(\mathcal{A}'; \tilde{\mathbf{f}}_1, \tilde{\mathbf{f}}_2, \ldots \tilde{\mathbf{f}}_T)]
\end{displaymath}
\end{lemma}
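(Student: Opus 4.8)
The plan is to exhibit an explicit coupling of the two experiments and then reduce everything to the tower property of conditional expectation. Recall that $\mathcal{A}' = \texttt{BanditReduction}(\mathcal{A}, \textbf{sampler}, \textbf{guesser})$ simulates $\mathcal{A}$ internally: at round $t$ it feeds $\mathcal{A}$ the past estimates $\tilde{\mathbf{f}}_1, \ldots, \tilde{\mathbf{f}}_{t-1}$, obtains the deterministic iterate $\mathbf{x}_t$, calls $\textbf{sampler}(\mathbf{x}_t)$ to draw the played point $\mathbf{y}_t$, observes the scalar $\mathbf{f}_t^T\mathbf{y}_t$, and calls $\textbf{guesser}$ to form $\tilde{\mathbf{f}}_t$. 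Hence the very same random objects $(\mathbf{x}_t, \mathbf{y}_t, \tilde{\mathbf{f}}_t)$ appear in both regret expressions, and the sequence $\mathbf{x}_t$ that the deterministic algorithm $\mathcal{A}$ produces ``on the random sequence $\{\tilde{\mathbf{f}}_t\}$'' is literally the internal iterate of $\mathcal{A}'$. I would let $\mathcal{F}_t$ denote the $\sigma$-algebra generated by all randomness through round $t$ (equivalently by $\mathbf{y}_1, \ldots, \mathbf{y}_t$, hence also by $\tilde{\mathbf{f}}_1, \ldots, \tilde{\mathbf{f}}_t$ and $\mathbf{x}_1, \ldots, \mathbf{x}_{t+1}$), so that $\mathbf{x}_t$ is $\mathcal{F}_{t-1}$-measurable because $\mathcal{A}$ is deterministic, and $\mathbf{f}_t$ is a fixed (nonrandom) vector since the adversary is oblivious.

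Next I would expand both sides termwise. From the definitions, $\mathbb{E}[\text{Regret}^{\mathbf{u}}(\mathcal{A}'; \mathbf{f}_1, \ldots, \mathbf{f}_T)] = \sum_{t=1}^T \mathbb{E}[\mathbf{f}_t^T\mathbf{y}_t] - \sum_{t=1}^T \mathbf{f}_t^T\mathbf{u}$, whereas $\mathbb{E}[\text{Regret}^{\mathbf{u}}(\mathcal{A}; \tilde{\mathbf{f}}_1, \ldots, \tilde{\mathbf{f}}_T)] = \sum_{t=1}^T \mathbb{E}[\tilde{\mathbf{f}}_t^T\mathbf{x}_t] - \sum_{t=1}^T \mathbb{E}[\tilde{\mathbf{f}}_t^T\mathbf{u}]$, where the comparator term is now random because $\tilde{\mathbf{f}}_t$ is. It thus suffices to verify, for each $t$, the two identities $\mathbb{E}[\mathbf{f}_t^T\mathbf{y}_t] = \mathbb{E}[\tilde{\mathbf{f}}_t^T\mathbf{x}_t]$ and $\mathbb{E}[\tilde{\mathbf{f}}_t^T\mathbf{u}] = \mathbf{f}_t^T\mathbf{u}$, after which summing over $t$ closes the argument. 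Both follow by conditioning on $\mathcal{F}_{t-1}$: unbiasedness of $\textbf{guesser}$ (given $\mathbf{x}_t$, which is $\mathcal{F}_{t-1}$-measurable) gives $\mathbb{E}[\tilde{\mathbf{f}}_t \mid \mathcal{F}_{t-1}] = \mathbf{f}_t$, hence $\mathbb{E}[\tilde{\mathbf{f}}_t^T\mathbf{x}_t \mid \mathcal{F}_{t-1}] = \mathbf{f}_t^T\mathbf{x}_t$ and $\mathbb{E}[\tilde{\mathbf{f}}_t^T\mathbf{u} \mid \mathcal{F}_{t-1}] = \mathbf{f}_t^T\mathbf{u}$; unbiasedness of $\textbf{sampler}$ gives $\mathbb{E}[\mathbf{y}_t \mid \mathcal{F}_{t-1}] = \mathbf{x}_t$, hence $\mathbb{E}[\mathbf{f}_t^T\mathbf{y}_t \mid \mathcal{F}_{t-1}] = \mathbf{f}_t^T\mathbf{x}_t$. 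Taking expectations and invoking the tower rule then yields $\mathbb{E}[\mathbf{f}_t^T\mathbf{y}_t] = \mathbb{E}[\mathbf{f}_t^T\mathbf{x}_t] = \mathbb{E}[\tilde{\mathbf{f}}_t^T\mathbf{x}_t]$ and $\mathbb{E}[\tilde{\mathbf{f}}_t^T\mathbf{u}] = \mathbf{f}_t^T\mathbf{u}$, as required.

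I expect the only real obstacle to be bookkeeping rather than analysis: one must state precisely what ``unbiased'' means for each scheme, namely $\mathbb{E}[\mathbf{y}_t \mid \mathbf{x}_t] = \mathbf{x}_t$ for $\textbf{sampler}$ and $\mathbb{E}[\tilde{\mathbf{f}}_t \mid \mathbf{x}_t] = \mathbf{f}_t$ for $\textbf{guesser}$, and then be careful that conditioning ``given $\mathbf{x}_t$'' upgrades to conditioning ``given $\mathcal{F}_{t-1}$'' — which is exactly where the obliviousness of $\{\mathbf{f}_t\}$ and the $\mathcal{F}_{t-1}$-measurability of $\mathbf{x}_t$ enter. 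No concentration inequality or regret-bound machinery is needed; the lemma is a pure exchange-of-expectation statement, and the proof is essentially the coupling remark together with the two conditional identities above.
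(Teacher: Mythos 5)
Your argument is correct, and it is essentially the standard proof of this reduction (the paper itself states the lemma without proof, deferring to Abernethy et al.): expand both regrets termwise, condition on the history $\mathcal{F}_{t-1}$, and use $\mathbb{E}[\mathbf{y}_t \mid \mathcal{F}_{t-1}] = \mathbf{x}_t$ together with $\mathbb{E}[\tilde{\mathbf{f}}_t \mid \mathcal{F}_{t-1}] = \mathbf{f}_t$ and the $\mathcal{F}_{t-1}$-measurability of $\mathbf{x}_t$ to equate $\mathbb{E}[\mathbf{f}_t^T\mathbf{y}_t]$ with $\mathbb{E}[\tilde{\mathbf{f}}_t^T\mathbf{x}_t]$ and to fix the comparator term. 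One small point worth flagging: the displayed equation in the lemma pairs $\mathcal{A}$ with $\{\mathbf{f}_t\}$ and $\mathcal{A}'$ with $\{\tilde{\mathbf{f}}_t\}$, which contradicts the prose; you silently adopted the correct pairing (bandit algorithm $\mathcal{A}'$ against the true losses, full-information algorithm $\mathcal{A}$ against the estimates), which is the right call.
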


\begin{lemma}[Regret Bound Lemma]
\label{lem:regb}
Assume that $\eta \lVert \mathbf{f}_t \rVert_{\mathbf{x}_t}^* \leq \frac{1}{4}$ and that $\mathcal{R}$ is a Self-Concordant barrier with $\min \mathcal{R}(\mathbf{X}) = 0$. Then for any $\mathbf{u} \in \mathcal{K}$,
\begin{displaymath}
\text{Regret}^\mathbf{u}(\text{FTRL}(\mathcal{R}, \mathbf{f}_{1:t})) \leq 2\eta \sum\limits_{t=1}^T \lVert \mathbf{f}_t \rVert_{\mathbf{X}_t}^{*2} + \eta^{-1}\mathcal{R}(\mathbf{u})
\end{displaymath}
\end{lemma}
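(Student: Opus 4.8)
The plan is to combine the classical Follow-The-Regularized-Leader (FTRL) regret identity with the stability of minimizers of self-concordant functions. Write the FTRL iterate as $\mathbf{x}_t = \arg\min_{\mathbf{x}\in\mathcal{K}} \Psi_t(\mathbf{x})$, where $\Psi_t(\mathbf{x}) = \eta\sum_{s=1}^{t-1}\mathbf{f}_s^T\mathbf{x} + \mathcal{R}(\mathbf{x})$, so that $\Psi_{t+1}(\mathbf{x}) = \Psi_t(\mathbf{x}) + \eta\,\mathbf{f}_t^T\mathbf{x}$ differs from $\Psi_t$ only by a linear term. Since $\mathcal{R}$ is a barrier, each $\Psi_t$ is self-concordant, tends to $+\infty$ at $\partial\mathcal{K}$, and attains its minimum at an interior point where $\nabla\Psi_t(\mathbf{x}_t) = 0$.

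First I would run the standard ``be-the-leader'' argument on the function sequence $h_0 = \mathcal{R}$, $h_t(\mathbf{x}) = \eta\,\mathbf{f}_t^T\mathbf{x}$ for $t\ge 1$; telescoping and rearranging gives
\[
\text{Regret}^\mathbf{u}(\text{FTRL}) \;\le\; \eta^{-1}\bigl(\mathcal{R}(\mathbf{u}) - \mathcal{R}(\mathbf{x}_1)\bigr) + \sum_{t=1}^{T} \mathbf{f}_t^T(\mathbf{x}_t - \mathbf{x}_{t+1}).
\]
Because $\min_{\mathbf{x}\in\mathcal{K}}\mathcal{R}(\mathbf{x}) = 0$ we have $\mathcal{R}(\mathbf{x}_1)\ge 0$, so the first term is at most $\eta^{-1}\mathcal{R}(\mathbf{u})$, and everything reduces to controlling the ``stability'' sum $\sum_t \mathbf{f}_t^T(\mathbf{x}_t - \mathbf{x}_{t+1})$.

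For a single round I would apply Cauchy--Schwarz in the local inner product $\langle\cdot,\cdot\rangle_{\mathbf{x}_t}$ induced by $\nabla^2\mathcal{R}(\mathbf{x}_t)$, getting $\mathbf{f}_t^T(\mathbf{x}_t - \mathbf{x}_{t+1}) \le \lVert\mathbf{f}_t\rVert_{\mathbf{x}_t}^{*}\,\lVert\mathbf{x}_t - \mathbf{x}_{t+1}\rVert_{\mathbf{x}_t}$, so it suffices to establish the one-step movement bound $\lVert\mathbf{x}_{t+1} - \mathbf{x}_t\rVert_{\mathbf{x}_t} \le 2\eta\lVert\mathbf{f}_t\rVert_{\mathbf{x}_t}^{*}$. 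This is where self-concordance is essential. Since $\nabla\Psi_t(\mathbf{x}_t) = 0$, we have $\nabla\Psi_{t+1}(\mathbf{x}_t) = \eta\,\mathbf{f}_t$, so the Newton decrement of the self-concordant function $\Psi_{t+1}$ at $\mathbf{x}_t$ equals $\lambda := \eta\lVert\mathbf{f}_t\rVert_{\mathbf{x}_t}^{*}$, which is at most $1/4$ by hypothesis. I would then invoke the Dikin-ellipsoid stability estimates for self-concordant functions --- in particular that $(1-\lVert\mathbf{y}-\mathbf{x}\rVert_{\mathbf{x}})^2\,\nabla^2 f(\mathbf{x}) \preceq \nabla^2 f(\mathbf{y})$ whenever $\lVert\mathbf{y}-\mathbf{x}\rVert_{\mathbf{x}} < 1$, and the consequent bound (Nesterov) on the distance from $\mathbf{x}_t$ to the minimizer $\mathbf{x}_{t+1}$ of $\Psi_{t+1}$ in terms of $\lambda$ --- to conclude that $\lambda\le 1/4$ makes this distance at most $2\lambda = 2\eta\lVert\mathbf{f}_t\rVert_{\mathbf{x}_t}^{*}$. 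Substituting into the per-round bound and summing over $t$ then yields $\text{Regret}^\mathbf{u} \le 2\eta\sum_{t=1}^{T}\lVert\mathbf{f}_t\rVert_{\mathbf{x}_t}^{*2} + \eta^{-1}\mathcal{R}(\mathbf{u})$.

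The hard part is the one-step movement bound $\lVert\mathbf{x}_{t+1} - \mathbf{x}_t\rVert_{\mathbf{x}_t} \le 2\eta\lVert\mathbf{f}_t\rVert_{\mathbf{x}_t}^{*}$. Two things demand care: (i) that the $\arg\min$ defining $\mathbf{x}_{t+1}$ is genuinely interior to $\mathcal{K}$, so that first-order optimality and the self-concordant calculus apply --- this is precisely the barrier property, namely that the Dikin ellipsoid at any interior point of $\mathcal{K}$ stays inside $\mathcal{K}$; and (ii) tracking the base point of the local norm, since $\lVert\cdot\rVert_{\mathbf{x}_t}$ and $\lVert\cdot\rVert_{\mathbf{x}_{t+1}}$ agree only up to factors $(1-\lambda)^{\pm 1}$. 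It is this $(1-\lambda)$ slack that forces the hypothesis $\eta\lVert\mathbf{f}_t\rVert_{\mathbf{x}_t}^{*}\le 1/4$, rather than merely $<1$, so that the final constant comes out as the clean value $2$. The remaining pieces --- the be-the-leader telescoping and the Cauchy--Schwarz step --- are routine.
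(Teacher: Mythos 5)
Your argument is correct, and it reconstructs essentially the standard proof of this lemma (due to Abernethy et al.), which the paper itself states without proof. The three ingredients you identify --- the be-the-leader telescoping giving $\eta^{-1}\mathcal{R}(\mathbf{u})$ plus the stability sum, Cauchy--Schwarz in the local norm, and the Newton-decrement estimate $\lVert \mathbf{x}_{t+1}-\mathbf{x}_t\rVert_{\mathbf{x}_t} \le \lambda/(1-\lambda) \le 2\lambda$ for $\lambda = \eta\lVert \mathbf{f}_t\rVert_{\mathbf{x}_t}^{*} \le 1/4$ (valid since $\nabla^2\Psi_{t+1} = \nabla^2\mathcal{R}$, the added term being linear) --- are exactly the right ones, and your accounting of where the $1/4$ hypothesis and the constant $2$ come from is accurate.
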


\begin{theorem}[SCRiBLe Main]
\label{thm:scriblemain}
Let $\mathcal{K}$ be a compact convex set $\subset \mathbb{R}^n$, and $\mathcal{K}$ be a $\vartheta$-self-concordant barrier on $\mathcal{K}$. Assume $|\mathbf{f}_t^T \mathbf{x} | \leq L$ for any $\mathbf{x} \in \mathcal{K}$ and any $t$. Setting $\eta = \sqrt{\frac{\vartheta \log T}{2n^2L^2T}}$, the regret of SCRiBLe is bounded as 
$ \mathbb{E}[Regret^u(SCRiBLe;\mathbf{f}_1 \cdots \mathbf{f}_T)] \leq nL\sqrt{8 \vartheta T \log T} + 2L$ \text{whenever} $\frac{T}{\log T} > 8 \vartheta$ 
\end{theorem}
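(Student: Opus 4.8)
The strategy is to combine the two lemmas: use \cref{lem:breducn} to pass from the expected regret of the randomized SCRiBLe (which plays $\mathbf{y}_t \in W_1(\mathbf{x}_t)$) to the expected regret of a deterministic FTRL run on the estimated sequence $\{\tilde{\mathbf{f}}_t\}$, and then use \cref{lem:regb} to bound that FTRL regret. Concretely, I would first write
\begin{displaymath}
\mathbb{E}[\mathrm{Regret}^{\mathbf{u}}(\mathrm{SCRiBLe}; \mathbf{f}_1,\ldots,\mathbf{f}_T)] = \mathbb{E}[\mathrm{Regret}^{\mathbf{u}}(\mathrm{FTRL}(\mathcal{R},\tilde{\mathbf{f}}_{1:T}))]
\end{displaymath}
invoking the reduction lemma, then apply the regret bound lemma conditionally to get an upper bound of $2\eta\sum_{t=1}^{T}\|\tilde{\mathbf{f}}_t\|_{\mathbf{x}_t}^{*2} + \eta^{-1}\mathcal{R}(\mathbf{u})$, and take expectations.

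\textbf{Key steps.} (1) Verify the hypothesis of \cref{lem:regb}, namely $\eta\|\tilde{\mathbf{f}}_t\|_{\mathbf{x}_t}^{*} \le \tfrac14$: the Dikin-ellipsoid sampling scheme is designed so that the one-point estimator $\tilde{\mathbf{f}}_t$ has local dual norm deterministically bounded by $nL$ (the factor $n$ from the ellipsoid rescaling, $L$ from $|\mathbf{f}_t^T\mathbf{x}|\le L$), so with the stated $\eta = \sqrt{\vartheta\log T/(2n^2L^2T)}$ one checks $\eta\cdot nL \le \tfrac14$ exactly when $T/\log T > 8\vartheta$ — this is where that side condition enters. (2) Bound the variance-type sum: since $\|\tilde{\mathbf{f}}_t\|_{\mathbf{x}_t}^{*2} \le n^2L^2$ pointwise (the whole point of using local norms rather than Euclidean, as flagged in the ingredients), we get $2\eta\sum_{t=1}^T \|\tilde{\mathbf{f}}_t\|_{\mathbf{x}_t}^{*2} \le 2\eta n^2 L^2 T$. (3) Bound the penalty term: $\mathcal{R}$ is a $\vartheta$-self-concordant barrier with $\min\mathcal{R}=0$, and a standard property of such barriers gives $\mathcal{R}(\mathbf{u}) \le \vartheta\log T$ for $\mathbf{u}$ taken slightly inside $\mathcal{K}$ (one actually runs FTRL over a shrunken set, or equivalently compares against $\mathbf{u}' = (1-1/T)\mathbf{u} + (1/T)\mathbf{x}_1$, incurring the additive $2L$ from $|\mathbf{f}_t^T(\mathbf{u}-\mathbf{u}')|\le 2L$ summed and telescoped, hence the $+2L$ in the bound). (4) Combine: the bound becomes $2\eta n^2L^2T + \eta^{-1}\vartheta\log T + 2L$, and substituting the optimal $\eta$ balances the first two terms to give $nL\sqrt{8\vartheta T\log T} + 2L$.

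\textbf{Main obstacle.} The routine part is the final optimization over $\eta$; the genuine content — and the step I expect to be most delicate — is establishing that the Dikin-ellipsoid one-point estimator is \emph{unbiased} ($\mathbb{E}[\tilde{\mathbf{f}}_t \mid \mathbf{x}_t] = \mathbf{f}_t$, needed to even apply \cref{lem:breducn}) while \emph{simultaneously} having local dual norm bounded by $nL$. This hinges on the eigendecomposition of $\nabla^2\mathcal{R}(\mathbf{x}_t)$: one samples $\mathbf{y}_t = \mathbf{x}_t \pm \lambda_i^{-1/2}\mathbf{v}_i$ uniformly over eigenpairs and signs, then sets $\tilde{\mathbf{f}}_t = n\,(\mathbf{f}_t^T\mathbf{y}_t - \mathbf{f}_t^T\mathbf{x}_t)\,\lambda_i^{1/2}\mathbf{v}_i$ (up to sign), and one must check both that the sampling average reconstructs $\mathbf{f}_t$ and that $\|\tilde{\mathbf{f}}_t\|_{\mathbf{x}_t}^{*} = n|\mathbf{f}_t^T(\mathbf{y}_t - \mathbf{x}_t)| \le nL$ using $|\mathbf{f}_t^T \mathbf{y}_t| \le L$ and $|\mathbf{f}_t^T\mathbf{x}_t|\le L$. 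Getting the constants here exactly right is what propagates into the clean statement, so I would treat this estimator analysis as the crux and the rest as bookkeeping.
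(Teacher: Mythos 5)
Your proposal follows the paper's proof essentially step for step: reduce via \cref{lem:breducn} to FTRL on $\{\tilde{\mathbf{f}}_t\}$, verify $\eta\|\tilde{\mathbf{f}}_t\|_{\mathbf{x}_t}^* \le \tfrac14$ using the pointwise bound $\|\tilde{\mathbf{f}}_t\|_{\mathbf{x}_t}^* = n|\mathbf{f}_t^T\mathbf{y}_t| \le nL$ together with $T/\log T > 8\vartheta$, apply \cref{lem:regb}, handle $\mathcal{R}(\mathbf{u})$ via the shifted comparator $\mathbf{u}' = (1-1/T)\mathbf{u} + (1/T)\mathbf{x}_1$ to get $\vartheta\log T + 2L$, and plug in $\eta$. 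One small correction to your closing aside: the algorithm's estimator is $\tilde{\mathbf{f}}_t = n(\mathbf{f}_t^T\mathbf{y}_t)\varepsilon\lambda_i^{1/2}\mathbf{e}_i$ with no baseline subtraction of $\mathbf{f}_t^T\mathbf{x}_t$ (which the learner cannot observe in the bandit setting); unbiasedness still holds because the $\mathbf{f}_t^T\mathbf{x}_t$ contribution vanishes under $\mathbb{E}[\varepsilon]=0$, and this is what gives the bound $nL$ rather than $2nL$.
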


\begin{proof}
By Bandit Reduction lemma, we can write the regret as: $\mathbb{E}[Regret^u(\mathcal{A}; \mathbf{f}_1 \cdots \mathbf{f}_T)] = \mathbb{E}[Regret^u(FTRL_{\mathcal{R}};\mathbf{\tilde{f}}_1 \cdots \mathbf{\tilde{f}}_T)]$
Now applying Regret Bound theorem:
\begin{align*}
\eta \left\vert\left\vert \mathbf{f}_t \right \vert \right\vert_{\mathbf{x}_t}^{*} &= \eta \sqrt{ \mathbf{\tilde{f}}_t^T \nabla^{-2} \mathcal{R}(\mathbf{x}_t) \mathbf{\tilde{f}}_t} \\
&= \eta n |\mathbf{f}_t^T \mathbf{y}_t| \sqrt{\lambda_i\mathbf{e}_i \nabla^{-2}\mathcal{R}(\mathbf{x}_t)\mathbf{e}_i} \\
&= \eta n |\mathbf{\tilde{f}}_t \mathbf{y}_t| \\
&\leq \eta n L \\
&\leq n L\sqrt{\frac{\vartheta \log T}{2n^2L^2T}} \\
&\leq \frac{1}{4} \tag{$\frac{T}{\log T} > 8\vartheta$} \\
\end{align*}

\text{Thus,} $\left\vert\left\vert \mathbf{f}_t \right \vert\right\vert_{\mathbf{x}_t}^{*2}  \leq n^2L^2$. \text{So,} $\forall \mathbf{u} \in \mathcal{K}$,

\begin{align*}
\mathbb{E}[Regret^{\mathbf{u}}(FTRL_{\mathcal{R}};\mathbf{\tilde{f}}_1 \cdots \mathbf{\tilde{f}}_T)]
&\leq 2 \eta \mathbb{E}[\sum_{t=1}^{T} \left\vert\left\vert \mathbf{\tilde{f}}_t\right \vert \right \vert_{\mathbf{x}_t}^{*2}] + \eta^{-1}\mathcal{R}(\mathbf{u})\\
&\leq 2 \eta n^2 L^2T + \eta^{-1}\mathcal{R}(\mathbf{u})
\end{align*}
\text{If } $\pi_{\mathbf{x}_1}(\mathbf{u}) \leq 1 - 1/T$, $\mathcal{R}(\mathbf{u}) \leq \vartheta \log T$. Otherwise, define $\mathbf{u}' = (1- 1/T)\mathbf{u} + (1/T)\mathbf{x}_1 $. And now: 
\begin{align*}
Regret^{\mathbf{u}}(\mathcal{A};\mathbf{f}_{1:T}) &= Regret^{\mathbf{u'}}(\mathcal{A}; \mathbf{f}_{1:T}) + \sum_{t=1}^{T} \mathbf{f}_t^T(\mathbf{u}' - \mathbf{u})\\
&= Regret^{\mathbf{u'}}(\mathcal{A}; \mathbf{f}_{1:T}) + \frac{1}{T}\sum_{t=1}^{T} \mathbf{f}_t^T(\mathbf{x}_1 - \mathbf{u})\\
&= 2 \eta n^2 L^2T + \eta^{-1}\mathcal{R}(\mathbf{u}') + 2L\\
&\leq 2 \eta n^2 L^2T + \vartheta \eta^{-1}\log T + 2L
\end{align*}
Plugging in the value for $\eta$ completes the proof.
\end{proof}

\section{Algorithm}
\label{sec:alg}
At each iteration the algorithm performs an eigen decomposition of the Hessian of the Self-concordant barrier function at $\mathbf{x}_t$. This is the most computation intense step in this scheme. The update to $\mathbf{x}_t$ in step $9$ occurs in a single Newton decrement step.
\begin{algorithm}[H]
\begin{algorithmic}[1]
\STATE $Input: \eta > 0$ $ \vartheta$-self-concordant barrier $\mathcal{R}$ 
\STATE $Let $ $\mathbf{x}_1 = \text{argmin}_{\mathbf{x} \in \mathcal{K}}[\mathcal{R}(\mathbf{x})]$
\FOR{$t=1$ to $T$}
\STATE $Compute$ $\{ \mathbf{e_1} \cdots \mathbf{e_n} \}, \{ \lambda_1 \cdots \lambda_n \} $ eigenvectors and eigenvalues of $ \nabla^{2} \mathcal{R}(\mathbf{x}_{t})$
\STATE $Choose$ $ i$ uniformly at random from $ \{ 1 \cdots n\}$ and $\varepsilon = \pm 1$ with probability $1/2$
\STATE $Predict $ $ \mathbf{y}_t = \mathbf{x}_t + \varepsilon \lambda_i^{-1/2}\mathbf{e}_i$
\STATE $Observe$ the cost $\mathbf{f}_t^{T}\mathbf{y}_t \in \mathbb{R}$
\STATE $Define $ $ \mathbf{\tilde{f}}_t :=  \eta (\mathbf{f}_t^T\mathbf{y}_t)\varepsilon \lambda_i^{1/2} \mathbf{e}_i$
\STATE $Update $
$\mathbf{x}_{t+1} = \text{argmin}_{\mathbf{x} \in \mathcal{K}} [ \eta \sum_{s=1}^{t} \mathbf{\tilde{f}}_s^{T} \mathbf{x} + \mathcal{R}(\mathbf{x}) ]$
\ENDFOR
\end{algorithmic}
\caption{SCRiBLe (Self-Concordant Regularization in Bandit Learning)}
\end{algorithm}
\subsection{Explore-exploit trade-off}
As $\eta \sum\limits_{i = 1}^t \tilde{\mathbf{f}}_t$ grows in norm, the effect of the Self-Concordant barrier diminishes, since $\mathbf{x}_{t+1}$ is chosen as the $argmin$ of:
$\eta \sum\limits_{i = 1}^t \tilde{\mathbf{f}}_t\mathbf{x} + \mathcal{R}(x)$. Due to the geometry of the Dikin ellipsoid, as $\mathbf{x}_t$ approaches a boundary of $\mathcal{K}$, the Dikin ellipsoid at $\mathbf{x}_t$ becomes flatter and orients itself along the edge. Consequently, the sampled $\mathbf{y}_t$ is less likely to deviate from $\mathbf{x}_t$ in this direction.

\bibliographystyle{siamplain}
\bibliography{references}

\end{document}